\newtheorem{theorem}{Theorem}
\newcolumntype{Y}{>{\centering\arraybackslash}X}
\journal{ }
\begin{document}

\begin{frontmatter}



\title{Reinforcement-Learned Unequal Error Protection for Quantized Semantic Embeddings} 


\author[1]{Moirangthem Tiken Singh}
\ead{tiken.m@dibru.ac.in}

\author[1]{Adnan Arif}
\ead{rs_adnanarif@dibru.ac.in}

\affiliation[1]{
	organization={Department of Computer Science and Engineering, 
		Dibrugarh University Institute of Engineering and Technology},
	addressline={Dibrugarh University},
	city={Dibrugarh},
	postcode={786004},
	state={Assam},
	country={India}
}

\begin{abstract}
This paper tackles the pressing challenge of preserving semantic meaning in communication systems constrained by limited bandwidth. We introduce a novel reinforcement learning framework that achieves per-dimension unequal error protection via adaptive repetition coding. Central to our approach is a composite semantic distortion metric that balances global embedding similarity with entity-level preservation, empowering the reinforcement learning agent to allocate protection in a context-aware manner. Experiments show statistically significant gains over uniform protection, achieving 6.8\% higher chrF scores and 9.3\% better entity preservation at 1 dB SNR. The key innovation of our framework is the demonstration that simple, intelligently allocated repetition coding enables fine-grained semantic protection- an advantage unattainable with conventional codes such as LDPC or Reed-Solomon. Our findings challenge traditional channel coding paradigms by establishing that code structure must align with semantic granularity. This approach is particularly suited to edge computing and IoT scenarios, where bandwidth is scarce, but semantic fidelity is critical, providing a practical pathway for next-generation semantic-aware networks.

\end{abstract}

\begin{keyword}
Semantic communication;Reinforcement learning; Unequal error protection; Bandwidth-efficient communication; Per-dimension protection; Adaptive repetition coding



\end{keyword}

\end{frontmatter}



   \section{Introduction}
Conventional communication paradigms, designed to ensure bit-level fidelity, are increasingly inadequate for next-generation intelligent systems. Applications ranging from 6G networks to large-scale IoT deployments generate information where semantic meaning is paramount, demanding a shift from exact bit recovery to reliable conveyance of intent over resource-constrained, noisy channels.

Traditional approaches, based on Shannon's separation theorem \cite{shannon1948}, optimize for bit reconstruction and are thus suboptimal when the objective is to preserve meaning \cite{bourtsoulatze2019deepjscc}. Semantic communication addresses this by directly encoding and recovering semantic content, offering enhanced robustness and spectral efficiency \cite{xu2022deepjscc}. This paradigm is particularly relevant for ultra-reliable low-latency communication (URLLC) and massive IoT, where it provides both theoretical and practical advantages over classical methods.

Deep joint source-channel coding (JSCC) has achieved notable success in image and speech transmission \cite{bourtsoulatze2019deepjscc}. For text, pre-trained language models produce compact, high-dimensional embeddings that capture sentence-level semantics \cite{reimers2019sbert, jiang2024largeaimodelempowered}. However, transmitting these embeddings efficiently remains a challenge. The semantic importance varies significantly across different dimensions, yet current methods typically apply uniform quantization and protection. This uniform treatment is inefficient, as it fails to prioritize more critical dimensions, resulting in wasted resources and degraded preservation of meaning.

This highlights two critical gaps in existing literature. First, current systems lack adaptive unequal error protection (UEP) mechanisms that dynamically adjust protection based on the semantic importance of each dimension. Most methods employ uniform or heuristic redundancy allocation, ignoring dimension-specific vulnerability \cite{chou2023uep, masnick1967linear}. Consequently, semantically crucial dimensions remain as exposed to channel noise as less important ones. Second, while reinforcement learning (RL) has been applied to resource allocation in JSCC, prior work focuses on continuous modulation or coarse rate control, rather than fine-grained, per-dimension discrete protection decisions with direct semantic feedback \cite{lu2025reinforcementlearningbasedheterogeneousmultitask, Zhang2024SemanticIoT}. Thus, no existing framework can dynamically allocate protection at the dimension level based on both semantic content and channel conditions.

To address these gaps, we propose an RL-driven framework that dynamically allocates discrete repetition counts to individual quantized embedding dimensions. We employ simple repetition coding not for its classical optimality, but for its structural property of enabling fine-grained, per-dimension adaptation—a critical feature that block codes, such as Reed-Solomon and LDPC, lack under strict bandwidth constraints. Training incorporates gradient clipping and entropy regularization, with formal convergence guarantees provided via two-timescale stochastic approximation analysis.

The principal contributions of this work are:
\begin{itemize}
    \item A reinforcement learning framework for adaptive, per-dimension unequal error protection via discrete repetition count allocation.
    \item A composite semantic distortion metric balancing global embedding similarity and entity-level correctness.
    \item Empirical demonstration that policies trained at moderate (8-bit) quantization generalize effectively to aggressive (4-bit) quantization, maintaining high fidelity while halving bandwidth.
    \item The insight that code structure must match semantic granularity: repetition coding enables per-dimension adaptation where conventional block codes fail or prevent it.
    \item A robust actor-critic algorithm with entropy regularization and formal convergence guarantees.
    \item Comprehensive evaluation on the AG News dataset across SNR regimes, quantization levels, and channel models.
\end{itemize}

The remainder of this paper is organized as follows: Section~\ref{sec:related} reviews related work; Section~\ref{sec:method} details the methodology; Section~\ref{sec:results} presents experimental results; Section~\ref{sec:discussion} discusses implications; and Section~\ref{sec:conclusion} concludes.

	\section{Related Work}
\label{sec:related}
This study operates at the intersection of semantic communication, representation compression, adaptive error protection, and reinforcement learning. The following review establishes relevant foundations and explicitly identifies the research gaps that our framework is designed to address.

The communication paradigm has evolved from strict bit-level fidelity to the preservation of meaning. Foundational models such as DeepSC introduced end-to-end joint source-channel coding (JSCC) using transformers, optimizing semantic similarity metrics rather than bit-error rates \cite{xie2021deep}. Subsequent surveys outline the progression toward intelligence-driven semantic systems for 6G, incorporating large language models and knowledge bases \cite{guo2024survey, xin2024semantic, zhang2024advances}. A consistent limitation of these studies is the treatment of semantic embeddings as indivisible units. This monolithic approach overlooks the heterogeneous importance of the individual latent dimensions. Under aggressive bandwidth constraints, dimensions carrying critical semantic information receive no more protection than those carrying less significant information, creating a fundamental performance bottleneck.

Efficient communication requires the compression of high-dimensional semantic embeddings. Techniques such as 4-bit quantization have been successfully applied to vector embeddings in retrieval systems, reducing the memory footprint with minimal loss of accuracy \cite{jeong20244bit}. More complex learned quantization methods preserve the semantic structure but often at a high computational cost \cite{gao2025foldtoken, liao2020isotropic}. This highlights the need for low-complexity quantization suitable for integration into communication pipelines.

Error protection strategies have historically employed Unequal Error Protection (UEP), allocating redundancy based on data priority. Classical applications include video streaming \cite{kim2025channel} and federated learning \cite{zheng2022unequal}. However, these schemes operate on coarse data blocks (e.g., image frames and model parameter segments). They are not designed for the fine-grained, context-dependent importance structure inherent in linguistic embeddings. A significant gap exists in achieving adaptive per-dimension protection aligned with semantic salience. Furthermore, the co-optimization of such protection with practical quantization remains largely unexplored.

Classical channel codes, such as Reed-Solomon (RS)~\cite{reed1999reed} and Low-Density Parity-Check (LDPC) codes~\cite{1057683}, are engineered for efficient error correction within data blocks. Although powerful, their integration into a dynamic, per-dimension UEP framework is non-trivial. Their fixed block lengths and complex encoding/decoding algorithms make it difficult to assign a unique, dynamically chosen code rate to each of the hundreds or thousands of embedding dimensions within a lean, trainable system.

In this context, simple repetition coding, despite its spectral inefficiency in classical information theory, offers distinct practical advantages for the research problem. It provides a discrete set of protection levels defined by the integer repetition count \(n\). This structure creates a tractable action space for a reinforcement learning agent tasked with the per-dimension allocation. Its decoding mechanism (e.g., majority voting) is computationally trivial and can be parallelized across dimensions. Therefore, we consider repetition coding not as an optimal channel code, but as a flexible and minimalistic primitive that allows us to isolate and study the problem of learning an optimal importance-weighted protection map. The core challenge shifts from code design to the intelligent allocation of simple resources.

Optimizing protection policies across a high-dimensional discrete action space is analytically intractable. Reinforcement Learning (RL) is a well-established method for making dynamic decisions under uncertainty. In communications, RL has been used for resource block assignment \cite{hu2024drl}, bandwidth adaptation \cite{huang2025deep}, and task offloading \cite{dai2021asynchronous}. However, these applications typically concern macro-scale resources (e.g., subcarriers and power levels). The use of RL for fine-grained per-dimension management of semantic embedding protection represents an open area.

Evaluating the performance of semantic communication systems requires metrics that correlate with meaning preservation, rather than symbolic accuracy. Research includes entity-aware evaluation for machine translation \cite{conia2025semeval}, and metrics designed to quantify bias \cite{aghaebe2025llms}. Nevertheless, a widely adopted composite semantic distortion metric that balances global semantic similarity with task-specific fidelity is still under development. The integration of such a metric as a reward signal for RL-driven communication optimization remains a challenge.

The existing literature provides strong yet isolated advances in semantic communication, quantization, UEP, RL, and semantic evaluation. The identified gap is the lack of a unified framework that cohesively integrates the following:
\begin{enumerate}
    \item Bandwidth-efficient quantized semantic embeddings from a frozen model\footnote{A frozen embedder is used to isolate the performance contribution of the adaptive protection layer. End-to-end training of the embedder is complementary but outside the scope of this study.},
    \item A learned, fine-grained Unequal Error Protection strategy operating at the individual embedding dimension level, implemented via a flexible primitive like repetition coding,
    \item An RL-based optimizer that dynamically allocates protection resources based on semantic content and channel state,
    \item Guidance from a composite semantic distortion metric acting as the optimization objective.
\end{enumerate}

Prior works, such as DeepSC \cite{xie2021deep}, do not model dimension-level importance, whereas advanced quantization studies \cite{gao2025foldtoken} neglect integrated channel adaptation. Classical UEP and modern RL resource allocation lack the required granularity. Our contribution is to propose and evaluate an end-to-end pipeline that bridges these gaps, demonstrating that dynamic, learned per-dimension protection is both feasible and superior to monolithic or uniform protection schemes for semantic fidelity under constraints.

\section{Methodology}
\label{sec:method}

This section presents an end-to-end semantic communication system designed to transmit discrete messages \(m \in \mathcal{M}\) across bandwidth-constrained noisy channels while minimizing semantic distortion. The proposed framework integrates frozen semantic embeddings, per-dimension quantization, variable-rate repetition coding, and a reinforcement-learning-based allocation policy.

\begin{figure*}[h!]
    \centering
    \includegraphics[width=0.92\linewidth]{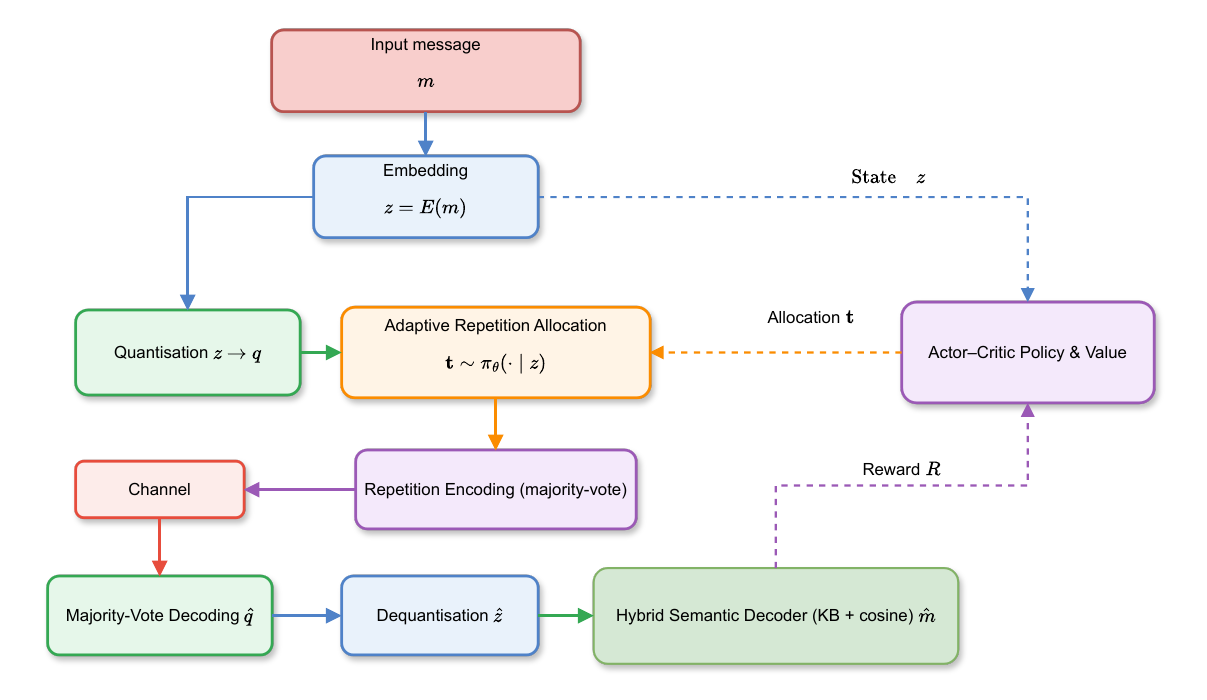}
    \caption{End-to-end semantic communication pipeline. The policy \(\pi_\theta\) assigns per-dimension repetition counts \(\mathbf{t}\) to quantized embeddings, optimized to minimize semantic distortion \(D_S\) under a fixed channel-use budget \(B\).}
    \label{fig:pipeline}
\end{figure*}

As shown in Fig.~\ref{fig:pipeline}, the system comprises four sequential components:
\begin{enumerate}
    \item a frozen semantic embedder \(E(\cdot)\) that maps discrete messages to continuous vector representations;
    \item a per-dimension scalar quantizer converting continuous embeddings to discrete symbols;
    \item a variable-rate repetition coding module implementing unequal error protection (UEP) at the dimension level; and
    \item a learned allocation policy \(\pi_\theta\) that dynamically assigns per-dimension repetition counts based on semantic importance.
\end{enumerate}

Given a source message \(m\in\mathcal{M}\), a pretrained frozen sentence encoder \(E(\cdot)\) generates a \(D\)-dimensional embedding:
\begin{equation}
\label{eq:embed}
\mathbf{z} = E(m) \in \mathbb{R}^D,
\end{equation}
where \(D\) denotes the embedding dimension determined by the selected encoder architecture. To facilitate stable quantization, we apply element-wise normalization:
\begin{equation}
s = \max_{i=1,\dots,D} |z_i| + \epsilon, \qquad \mathbf{u} = \mathbf{z}/s \in [-1,1]^D,
\end{equation}
with \(\epsilon = 10^{-6}\) to ensure numerical stability. Each normalized component \(u_i\) undergoes a uniform \(b\)-bit signed integer quantization:
\begin{equation}
\label{eq:quant}
q_i = \left\lfloor u_i \cdot q_{\max} \right\rceil, \qquad q_{\max} = 2^{b-1} - 1,
\end{equation}
where \(\lfloor\cdot\rceil\) denotes rounding to the nearest integer and \(q_i \in \{-q_{\max},\dots,q_{\max}\}\). The corresponding dequantization operation at the receiver is:
\begin{equation}
\label{eq:dequant}
\hat{z}_i = s \cdot \frac{q_i}{q_{\max}}.
\end{equation}

To provide dimension-level unequal error protection, each quantized symbol \(q_i\) is transmitted \(n_i = 1 + t_i\) times over the channel, where \(t_i \in \mathbb{Z}_{\ge 0}\) represents the extra repetition count allocated to dimension \(i\). The total transmission must satisfy the channel-use budget:
\begin{equation}
\label{eq:budget}
\sum_{i=1}^D (1 + t_i) \le B.
\end{equation}
This corresponds to a total bit budget of \(b \cdot B\). Equivalently, define the total extra-repetition budget
\begin{equation}
\label{eq:Tdef}
T \triangleq B - D \;\;\; \Big(\approx \big\lfloor D\cdot\text{ecc\_rate}\big\rfloor\Big),
\end{equation}
so that \(\sum_{i=1}^D t_i \le T\). At the receiver, each symbol is recovered via majority voting over its \(n_i\) received copies, with ties broken uniformly at random. After majority voting yields the error-corrected quantized vector \(\hat{\mathbf{q}}\), dequantization produces the recovered embedding \(\hat{\mathbf{z}}\) via Eq.~\eqref{eq:dequant}.

The receiver performs closed-vocabulary reconstruction by computing cosine similarities between \(\hat{\mathbf{z}}\) and all normalized knowledge-base embeddings \(\{E(d_i)\}_{d_i\in\mathcal{D}}\). The highest-scoring entry with embedding \(\mathbf{z}_k\) is selected as the candidate reconstruction \(\hat{m}\), with similarity score \(s_{\mathrm{rec}}=\cos(\hat{\mathbf{z}},\mathbf{z}_k)\). A two-threshold decision rule \((\tau_r,\tau_g)\) is applied to control output fidelity and confidence: if \(s_{\mathrm{rec}}\ge\tau_r\) the KB entry \(\hat{m}\) is returned verbatim; if \(s_{\mathrm{rec}}\le\tau_g\) a fixed failure token is emitted; otherwise \(\hat{m}\) is returned but flagged as uncertain. This closed-vocabulary retrieval policy makes semantic success interpretable and allows exact reconstruction when embedding recovery is sufficiently accurate.

Because the ultimate goal is to preserve meaning rather than raw bits, we quantify end-to-end performance using a composite semantic distortion metric that captures both global embedding alignment and task-specific factuality. For an original message \(m\) and its reconstruction \(\hat{m}\) we define
\begin{equation}
\label{eq:distortion_bridge}
\begin{split}
D_S(m,\hat{m}) =\; & \alpha\bigl[1-\cos(E(m),E(\hat{m}))\bigr] \\
                   & + (1-\alpha)\,L_{\mathrm{entity}}(m,\hat{m}),
\end{split}
\end{equation}
where \(\alpha\in[0,1]\) controls the trade-off between embedding-level similarity and entity-level correctness. The entity loss is
\begin{equation}
\label{eq:entity_loss_bridge}
L_{\mathrm{entity}}(m,\hat{m})=\frac{\sum_{e\in\mathcal{E}(m)} w(e)\,\mathbb{I}\{e\notin\mathcal{E}(\hat{m})\}}
{\sum_{e\in\mathcal{E}(m)} w(e)},
\end{equation}
with \(\mathcal{E}(\cdot)\) extracting critical entities (e.g., names, dates, numeric values) and \(w(e)\ge0\) denoting their relative importance.

We adopt \(D_S\) as the principal optimization and evaluation quantity: it both drives policy learning and provides the scalar reward used by the reinforcement learner. Concretely, for any allocation \(\mathbf{t}\) the end-to-end transmission and retrieval pipeline yields \(\hat{m}(\mathbf{t})\), and the semantic distortion \(D_S(m,\hat{m}(\mathbf{t}))\) measures utility. Minimizing the expected value of \(D_S\) under the redundancy budget therefore aligns the learned allocation with the system-level objective of meaning preservation.

Let \(T\) be the extra-repetition budget defined in Eq.~\eqref{eq:Tdef}. The allocation problem is to learn a stochastic policy \(\pi_\theta(\mathbf{t}\mid\mathbf{z})\) that, given \(\mathbf{z}=E(m)\), assigns integer extra-repetition counts \(\mathbf{t}=(t_1,\dots,t_D)^\top\in\mathbb{N}_0^D\) so as to minimize expected semantic distortion under the redundancy constraint:
\begin{align}
\label{eq:opt}
\min_{\pi_\theta}\quad & \mathbb{E}_{m \sim \mathcal{D}}\Big[ \mathbb{E}_{\mathbf{t} \sim \pi_\theta(\cdot\mid\mathbf{z})}\big[ D_S(m, \hat{m}(\mathbf{t})) \big]\Big]\nonumber\\
\text{s.t.}\quad & \mathbb{E}_{m \sim \mathcal{D}}\Big[ \mathbb{E}_{\mathbf{t} \sim \pi_\theta(\cdot\mid\mathbf{z})}\Big[ \sum_{i=1}^D t_i \Big]\Big] \le T.
\end{align}

We solve this constrained problem via Lagrangian relaxation with an entropy bonus. Introducing non-negative penalty \(\lambda_{\mathrm{reg}}\) and entropy coefficient \(\beta\ge 0\), we optimize the regularized objective:
\begin{equation}
\label{eq:lagrangian}
\begin{split}
\max_{\theta}\; & \mathbb{E}_{m,\mathbf{t}\sim\pi_\theta}\Big[ -D_S(m,\hat{m}(\mathbf{t})) - \lambda_{\mathrm{reg}}\sum_{i=1}^D t_i \Big] \\
&\qquad + \beta\,\mathbb{E}_{m}\big[ H(\pi_\theta(\cdot\mid\mathbf{z}))\big],
\end{split}
\end{equation}
where \(H(\cdot)\) denotes Shannon entropy. The penalty enforces the budget in expectation; the entropy term promotes exploration.

The policy \(\pi_\theta\) is parameterized by a lightweight multi-layer perceptron mapping normalized embeddings \(\mathbf{z}\) to a categorical probability vector \(\mathbf{p}=\pi_\theta(\mathbf{z})\in\Delta^{D-1}\). A separate critic \(v_\phi(\mathbf{z})\) estimates state values to reduce gradient variance. We optimize Eq.~\eqref{eq:lagrangian} using an Advantage Actor–Critic (A2C) algorithm adapted to the discrete allocation space.

Because the feasible allocation set is combinatorial, direct differentiation through argmax-style allocation is intractable. We therefore use a straight-through biased policy-gradient scheme that combines deterministic forward allocations with differentiable relaxed sampling for the backward pass. Concretely, the forward simulation uses a deterministic allocation
\begin{equation}
\label{eq:tdet}
\mathbf{t}^{\mathrm{det}} \;=\; \big\lfloor \mathbf{p}\,T \big\rfloor,
\end{equation}
with any residual \(T-\|\mathbf{t}^{\mathrm{det}}\|_1\) assigned greedily to dimensions having the largest fractional parts \(\{p_iT\}\). This \(\mathbf{t}^{\mathrm{det}}\) is used end-to-end to produce the rollout reward (quantization, repetition encoding with \(1+t_i^{\mathrm{det}}\) copies per dimension, channel transmission, majority-vote decoding, dequantization, retrieval, and evaluation of \(D_S\)).

For gradient computation we sample a relaxed allocation \(\mathbf{t}^{\mathrm{relax}}\) from a differentiable surrogate, either \(\mathrm{Multinomial}(T;\mathbf{p})\) or a Gumbel--top-\(k\) relaxation, and compute policy gradients through \(\log\pi_\theta(\mathbf{t}^{\mathrm{relax}}\mid\mathbf{z})\). This straight-through approach yields a low-variance, practical estimator: the forward pass reflects deterministic deployment behavior; the backward pass admits gradient flow.

The per-step scalar reward is
\begin{equation}
R \;=\; -D_S(m,\hat{m}) \;-\; \lambda_{\mathrm{reg}}\sum_{i=1}^D t_i^{\mathrm{det}},
\end{equation}
the advantage is \(\hat{A}=R-v_\phi(\mathbf{z})\), and actor/critic parameters are updated according to standard A2C objectives with entropy regularization (see Algorithm~\ref{alg:actor_critic}). In practice we normalize advantages per batch, clip gradients, and apply an entropy bonus to stabilize learning.

All policies are trained under a fixed 0\,dB AWGN channel with deterministic 8-bit uniform quantization; this choice stresses the policy in a harsh-noise regime to encourage robustness and cross-quantization transfer. At inference time the learned policy is used deterministically: only \(\mathbf{t}^{\mathrm{det}}\) is applied, which incurs no sampling variance and respects the prescribed redundancy budget.

\begin{algorithm*}[h!]
\caption{Advantage Actor–Critic for Per-Dimension Repetition Allocation}
\label{alg:actor_critic}
\begin{algorithmic}[1]
\Require Dataset \(\mathcal{D}\), embedder \(E\), channel budget \(B\), bits \(b\), extra-repetition budget \(T\), learning rates \(\gamma\) (actor) and \(\alpha\) (critic), coefficients \(\lambda_{\text{reg}}, \beta\)
\State Initialize actor parameters \(\theta\), critic parameters \(\phi\)
\For{epoch = 1 to \(N_{\text{epochs}}\)}
    \For{minibatch \( \{m_j\}_{j=1}^N \subset \mathcal{D}\)}
        \For{each message \(m\) in minibatch}
            \State \(\mathbf{z}\leftarrow E(m)\)
            \State \(\mathbf{p}\leftarrow\pi_\theta(\mathbf{z})\)
            \State \(\mathbf{t}^{\text{det}}\leftarrow \lfloor \mathbf{p}\,T\rfloor\) and greedily assign residuals
            \State Quantize \(\mathbf{z}\) to \(\mathbf{q}\) via Eq.~\eqref{eq:quant}
            \State Encode: repeat each \(q_i\) \(1+t_i^{\text{det}}\) times
            \State Simulate channel transmission; decode via majority voting to obtain \(\hat{\mathbf{q}}\)
            \State Dequantize to \(\hat{\mathbf{z}}\) via Eq.~\eqref{eq:dequant}; reconstruct message \(\hat{m}\)
            \State Compute reward \(R\leftarrow -D_S(m,\hat{m}) - \lambda_{\text{reg}}\sum_i t_i^{\text{det}}\)
            \State Sample relaxed allocation \(\mathbf{t}^{\text{relax}} \sim \mathrm{Multinomial}(T;\mathbf{p})\)
            \State Compute advantage \(\hat{A}\leftarrow R - v_\phi(\mathbf{z})\)
            \State Store \((\mathbf{z},\mathbf{t}^{\text{relax}},\hat{A},R)\)
        \EndFor
        \State Normalize advantages over batch; clip gradients
        \State Actor update: \(\theta\leftarrow\theta + \gamma\ \mathbb{E}\big[\hat{A}\nabla_\theta\log\pi_\theta(\mathbf{t}^{\text{relax}}|\mathbf{z}) + \beta\nabla_\theta H(\pi_\theta)\big]\)
        \State Critic update: \(\phi\leftarrow\phi - \alpha\ \mathbb{E}\big[\nabla_\phi (R - v_\phi(\mathbf{z}))^2\big]\)
    \EndFor
\EndFor
\State \Return \(\pi_\theta\)
\end{algorithmic}
\end{algorithm*}

\begin{theorem}[Convergence of Actor–Critic Policy]
\label{thm:convergence}
Assume: (i) rewards are uniformly bounded, (ii) policy and value approximators are Lipschitz-continuous, (iii) gradient estimators have bounded second moments, (iv) step sizes \(\{\alpha_k\}\) (critic) and \(\{\gamma_k\}\) (actor) satisfy Robbins–Monro conditions with \(\sum_k\alpha_k=\sum_k\gamma_k=\infty\), \(\sum_k\alpha_k^2,\sum_k\gamma_k^2<\infty\), and \(\gamma_k/\alpha_k\to 0\) (actor on slower timescale), and (v) the policy parameterization ensures sufficient exploration. Then the sequence \(\{\theta_k\}\) produced by Algorithm~\ref{alg:actor_critic} converges almost surely to the set of stationary points of the regularized Lagrangian in Eq.~\eqref{eq:lagrangian}, up to a sampling bias that vanishes as \(T\to\infty\).
\end{theorem}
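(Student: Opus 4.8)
The plan is to treat Algorithm~\ref{alg:actor_critic} as a two-timescale stochastic approximation scheme and invoke the ODE method. First I would rewrite the critic and actor updates as coupled recursions
\[
\phi_{k+1} = \phi_k + \alpha_k\big(g(\theta_k,\phi_k) + M^{\phi}_{k+1}\big), \qquad
\theta_{k+1} = \theta_k + \gamma_k\big(f(\theta_k,\phi_k) + M^{\theta}_{k+1}\big),
\]
where $g(\theta,\phi) = -\nabla_\phi\,\mathbb{E}[(R - v_\phi(\mathbf{z}))^2]$ is the expected critic gradient, $f(\theta,\phi)$ is the expected regularized policy gradient assembled from the advantage $\hat{A} = R - v_\phi(\mathbf{z})$ and the entropy bonus $\beta\nabla_\theta H(\pi_\theta)$, and $M^{\bullet}_{k+1}$ are martingale-difference terms. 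Assumption~(i) (bounded rewards), together with a projection onto a large compact set — or a Borkar--Meyn stability argument — keeps the critic iterates bounded, so by~(iii) the noise terms are square-integrable; assumption~(ii) makes $f$ and $g$ Lipschitz in $(\theta,\phi)$; and~(iv) supplies the Robbins--Monro conditions with the ratio $\gamma_k/\alpha_k\to 0$ that separates the timescales. These are precisely the hypotheses required to run the ODE analysis on each timescale.

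Second, I would exploit the timescale separation. On the fast timescale the actor is quasi-static, so the critic recursion tracks the ODE $\dot\phi = g(\theta,\phi)$ with $\theta$ frozen; because the critic minimizes a mean-squared error, this ODE converges to an equilibrium $\phi^\star(\theta)$ (a stationary point of the critic loss, and a genuine minimizer if $v_\phi$ is linear), and~(ii) makes $\theta\mapsto\phi^\star(\theta)$ continuous, so $\|\phi_k - \phi^\star(\theta_k)\|\to 0$ almost surely. On the slow timescale the critic is therefore effectively equilibrated, and the actor recursion becomes an asymptotically autonomous perturbation of $\dot\theta = \bar{f}(\theta) := f(\theta,\phi^\star(\theta))$. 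By the policy-gradient theorem applied to the regularized Lagrangian $J(\theta)$ of Eq.~\eqref{eq:lagrangian}, $\bar{f}(\theta) = \nabla_\theta J(\theta) + b_T(\theta)$, where $b_T(\theta)$ is the straight-through bias incurred by evaluating the reward at the deterministic allocation $\mathbf{t}^{\mathrm{det}} = \lfloor\mathbf{p}\,T\rfloor$ while differentiating through $\log\pi_\theta(\mathbf{t}^{\mathrm{relax}}\mid\mathbf{z})$ with $\mathbf{t}^{\mathrm{relax}}\sim\mathrm{Multinomial}(T;\mathbf{p})$; assumption~(v) ensures the induced process is well behaved, so $J$ is continuously differentiable and $\bar{f}$ is Lipschitz.

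Third, I would bound the bias and conclude. Both $\mathbf{t}^{\mathrm{det}}/T$ and $\mathbf{t}^{\mathrm{relax}}/T$ converge to $\mathbf{p}$: the rounding in $\mathbf{t}^{\mathrm{det}}$ costs at most $O(D)$ in $\ell_1$, i.e.\ $O(D/T)$ after normalization, while the multinomial fluctuation of $\mathbf{t}^{\mathrm{relax}}$ is $O(\sqrt{T})$, i.e.\ $O(1/\sqrt{T})$ after normalization. Since the decoding pipeline depends on the allocation only through the integer repetition counts and the reward is bounded, the resulting mismatch, integrated against $\nabla_\theta\log\pi_\theta$ (whose second moment is bounded by~(iii)), is $O(1/\sqrt{T})$ uniformly in $\theta$; hence $\sup_\theta\|b_T(\theta)\| = O(1/\sqrt{T})$. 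Invoking the standard perturbed-ODE argument (Hirsch's lemma on asymptotically autonomous systems), the actor iterates converge almost surely to a $c\,\sup_\theta\|b_T(\theta)\|$-neighborhood of the internally chain-transitive invariant set of the gradient flow $\dot\theta = \nabla_\theta J(\theta)$, which for a gradient flow is contained in $\{\theta:\nabla_\theta J(\theta) = 0\}$; letting $T\to\infty$ shrinks this neighborhood onto the exact stationary set, which is the claim.

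I expect the third step to be the crux: making the bias estimate rigorous requires controlling how the non-smooth, pipeline-dependent distortion $D_S(m,\hat{m}(\mathbf{t}))$ responds to perturbations of the integer allocation and then transporting that control through the score function. The cleanest route is to construct a Lipschitz-in-$\mathbf{p}$ surrogate for the expected reward landscape and combine it with a concentration (Poissonization) bound for the multinomial counts. A secondary technicality is establishing boundedness and fast-timescale stability when $v_\phi$ is a nonlinear MLP rather than a linear map; restricting attention to a compact parameter set via projection, or phrasing the critic conclusion in terms of stationary points of its loss, sidesteps this without weakening the actor-side result, since any residual critic bias is absorbed into the vanishing $b_T$ term.
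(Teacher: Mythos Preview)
Your proposal is correct and follows essentially the same route as the paper's own proof sketch: cast Algorithm~\ref{alg:actor_critic} as a two-timescale stochastic approximation, let the fast critic track the value function for the quasi-static actor, reduce the slow actor to a perturbed gradient flow on the regularized Lagrangian, and then argue that the straight-through/multinomial bias vanishes with $T$. The only noteworthy discrepancy is quantitative: the paper asserts an $\mathcal{O}(1/T)$ bias without justification, whereas your more careful accounting of the multinomial fluctuation yields $\mathcal{O}(1/\sqrt{T})$; your rate is the more defensible one absent additional smoothness of $D_S$ in the normalized allocation, and either rate suffices for the stated conclusion.
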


\begin{proof}[Sketch]
The critic updates operate on the faster timescale and track the value function for the current actor; conditioned on critic convergence, the actor performs stochastic gradient ascent on the expected regularized objective. Multinomial (or Gumbel) relaxation introduces a bias of order \(\mathcal{O}(1/T)\) in gradient estimates; this bias vanishes as \(T\) grows. Under the stated boundedness and martingale-difference conditions, two-timescale stochastic approximation yields almost-sure convergence to stationary points of the (biased) objective; letting \(T\to\infty\) removes the bias.
\end{proof}

For a minibatch of \(N\) messages, let \(C_E\), \(C_\pi\), and \(C_v\) denote the forward-pass costs of the embedder, actor and critic respectively (each typically \(\mathcal{O}(D)\)). Channel simulation and decoding require up to \(bB\) bit operations per message. The per-batch complexity is thus
\[
\mathcal{O}\bigl(N\,(C_E + C_\pi + C_v + bB)\bigr).
\]
During inference only one forward pass through \(E\) and \(\pi_\theta\) plus decoding are needed, reducing per-message complexity to \(\mathcal{O}(C_E + C_\pi + bB)\), suitable for low-latency edge deployment.

    \section{Results and Analysis}
\label{sec:results}

This section presents a comprehensive evaluation of the proposed framework. We begin by describing the experimental setup and the training regimen used to obtain the learned allocation policies. We then present the evaluation metrics and statistical methodology. The remainder of the section analyses performance in a sequence designed to build the reader's intuition: (i) convergence and stability of training, (ii) ablation of the objective balancing parameter \(\alpha\), (iii) transferability across quantization regimes, (iv) comparison with alternative allocation strategies and different ECC choices, (v) robustness under channel mismatch, and (vi) benchmarking against contemporary semantic communication systems.

\subsection{Experimental Setup}

We evaluate on the AG News subset (4,000 training sentences)~\cite{10.5555/2969239.2969312}, following common practice in text-based semantic communication works~\cite{salehi2025llmenableddatatransmissionendtoend}. The dataset comprises news articles from four domains (World, Sports, Business, and Science/Technology) with an average sentence length of approximately 12 tokens. Sentence embeddings are extracted using the frozen \texttt{all-MiniLM-L6-v2} model (384 dimensions)~\cite{allminilm_l6_v2}, followed by \(\ell_2\)-normalization. We use this standardized benchmark to ensure fair comparison to prior work.

Policies are trained exclusively on an Additive White Gaussian Noise (AWGN) channel at 0 dB SNR\footnote{Training at 0 dB SNR provides the strongest learning signal for unequal protection strategies and demonstrates robust generalization to higher SNR conditions; the converse does not hold.}. Training proceeds for three epochs with 128 episodes per epoch. The actor-critic network has two hidden layers of 128 units each, optimized with Adam (\(\eta = 5 \times 10^{-4}\)), gradient clipping (\(\|\nabla\|_2 \le 1.0\)), and entropy regularization coefficient \(\beta = 0.01\). The reward combines the composite semantic distortion \(D_S\) with a regularization term \(\lambda_{\mathrm{reg}}\) that penalizes excessive redundancy usage. The redundancy budget is fixed at \(\texttt{ecc\_rate}=0.02\), corresponding to roughly 7--8 extra repetitions across the 384-dimensional embedding, a budget chosen to reflect stringent edge-bandwidth constraints \cite{9929283}. During training, we use multinomial sampling from the policy's softmax output; during evaluation, we switch to deterministic allocation (floor division with greedy remainder assignment).

Default quantization is 8-bit uniform; ablations examine 4-, 12-, and 16-bit variants (both deterministic and stochastic). The primary error-correction primitive is per-dimension repetition coding. Generalization is evaluated across a broad set of channel models (Rayleigh, Rician with \(K=1\), Nakagami-\(m\) with \(m=1\), which is equivalent to Rayleigh, burst-error, and a Binary Symmetric Channel with crossover probability \(p\in[0.05,0.10]\)). We also compare our approach with classical block codes (Reed–Solomon and LDPC) to investigate interactions with conventional ECC designs.

Results are averaged over 400 held-out messages with 95\% confidence intervals obtained by bootstrapping (200 resamples). For hypothesis testing, we use Wilcoxon signed-rank tests~\cite{wilcoxon1945individual} for cosine similarity (\(p_{\text{cos}}\)) and semantic distortion (\(p_{D_S}\)), report Cohen's \(d\) for effect size, and apply Bonferroni correction where multiple comparisons are performed.

\subsection{Evaluation Metrics}

We assess the reliability of communication and semantic fidelity using a comprehensive set of evaluation metrics. The primary metric is the composite semantic distortion \(D_S\), which integrates embedding-level cosine similarity with an entity-preservation term; lower values denote improved semantic accuracy. To further quantify representational fidelity, we employ cosine similarity, which measures the geometric alignment between the original and reconstructed embeddings on a 0-1 scale. 

The quality of the textual reconstruction is evaluated using the chrF score~\cite{popovic2015chrf}, a character-level n-gram F score known for its robustness under noise and its strong correlation with human evaluations. Factual consistency is captured by the entity preservation fraction, defined as the proportion of critical entities (e.g., proper nouns, dates, numerical expressions, and locations) correctly maintained after transmission. 

From a communications perspective, we report the bit error rate (BER), the standard measure of the proportion of incorrectly received bits. In addition, overall natural-language generation quality is assessed using established NLG metrics, including BLEU-1 and BLEU-4~\cite{papineni2002bleu}, ROUGE-L~\cite{lin2004rouge}, METEOR~\cite{banerjee2005meteor}, and BERTScore~\cite{zhang2019bertscore}. Together, these metrics provide a multi-dimensional characterization of semantic integrity, linguistic quality, and transmission robustness.

All metrics reported are averages over the 400 evaluations with 95\% bootstrap confidence intervals. Where hypothesis testing is applied, we use permutation tests with Bonferroni correction.

\subsection{Training Convergence and Stability}

Before analyzing downstream performance, we verify training convergence and robustness to random seeds. Table~\ref{tab:convergence} reports convergence statistics at 0 dB SNR for 10 random seeds. The learned policy attains \(\ge 99.9\%\) of the improvement over uniform allocation by the first epoch and shows minimal change beyond epoch 3 (\(\Delta D_S \le 5\times 10^{-4}\), \(<0.06\%\) relative). The cross-seed variance is small (\(\sigma_{D_S} \le 0.0010\)), indicating that the training procedure produces stable and repeatable policies. Having established stability, we next examine how the objective balancing parameter \(\alpha\) affects performance in the critical low-SNR regime.
\begin{table}[h!]
    \centering
    \caption{Training convergence analysis at 0 dB SNR (10 random seeds). Performance gains are reported relative to the epoch-3 reference.}
    \label{tab:convergence}
    \begin{tabular}{p{1cm}p{2.2cm}p{2cm}p{1.8cm}}
        \toprule
        \textbf{Epochs} & \textbf{Mean$D_S$(0--3\,dB)} & \textbf{\(\sigma\) (seeds)} & \textbf{Gain (\%)} \\
        \midrule
        1  & 0.9000 & 0.0005 & 99.9 \\
        2  & 0.9001 & 0.0010 & 99.9 \\
        3  & 0.9003 & 0.0006 & 100.0 \\
        5  & 0.9004 & 0.0008 & 100.0 \\
        10 & 0.8999 & 0.0009 & 99.9 \\
        \bottomrule
    \end{tabular}
\end{table}

\subsection{Ablation: Distortion Balancing (\(\alpha\))}

Table~\ref{tab:alpha_ablation} and Figure~\ref{fig:alpha_ablation} show a systematic ablation over \(\alpha \in \{0.00,0.25,0.50,0.75,1.00\}\) under matched AWGN with repetition coding and deterministic 8-bit quantization (evaluation SNR 0--3\,dB). The balanced objective (\(\alpha=0.5\)) consistently outperforms both entity-only (\(\alpha=0\)) and embedding-only (\(\alpha=1\)) objectives in the critical 1--2\,dB range: at 1\,dB it yields the highest chrF and entity-preservation fraction (relative gains of 6.8\% and 9.3\% over uniform allocation, respectively). Notably, cosine similarity saturates early (median \(p_{\cos}=0.648\) at 2\,dB and is not significant), yet improvements in chrF, entity fraction, and \(D_S\) remain highly significant. This shows that cosine similarity alone is insufficient as an objective for semantic preservation in low-SNR regimes. At 3\,dB all configurations converge, and at 0\,dB the performance is near-random; therefore, we set \(\alpha=0.5\) for subsequent experiments.

\begin{table*}[!h]
\centering
\caption{Ablation on distortion balancing parameter \(\alpha\) under matched AWGN channel with repetition coding and deterministic 8-bit quantisation (evaluation SNR 0--3\,dB). The balanced composite objective (\(\alpha=0.5\)) consistently outperforms both the entity-only (\(\alpha=0\)) and cosine-only (\(\alpha=1\)) objectives in the critical 1--2\,dB regime. Bold indicates the optimal RL configuration for each SNR. * denotes median permutation-test \(p < 0.05\) vs uniform across \(\lambda_{\mathrm{reg}}\) values.}
\label{tab:alpha_ablation}
\begin{threeparttable}
\scriptsize
\setlength{\tabcolsep}{3pt}
\renewcommand{\arraystretch}{1.1}
\resizebox{\textwidth}{!}{%
\begin{tabular}{llcccc}
\toprule
Eval SNR & \(\alpha\) & chrF (95\% CI) & entity frac & median \(p_{\text{cos}}\) vs uniform & Notes \\
\midrule
0 dB & 0.00 & 0.0774 (0.0740--0.0808) & 0.0026 & 1.000 & \\
0 dB & 0.25 & 0.0773 (0.0738--0.0809) & 0.0026 & 0.594 & \\
0 dB & 0.50 & 0.0751 (0.0737--0.0765) & $<$0.0001 & 0.935 & \\
0 dB & 0.75 & 0.0773 (0.0738--0.0808) & 0.0026 & 0.346 & \\
0 dB & 1.00 & 0.0821 (0.0752--0.0912) & 0.0076 & 0.032 & * \\
0 dB & Uniform & 0.0774 (0.0740--0.0825) & 0.0026 & -- & \\
\midrule
1 dB & 0.00 & 0.2923 (0.2547--0.3418) & 0.2374 & 0.498 & \\
1 dB & 0.25 & 0.2950 (0.2566--0.3344) & 0.2390 & 0.882 & \\
1 dB & 0.50 & \textbf{0.3123 (0.2710--0.3498)} & \textbf{0.2595} & 0.032 & * (best) \\
1 dB & 0.75 & 0.3087 (0.2691--0.3475) & 0.2548 & 0.132 & \\
1 dB & 1.00 & 0.3055 (0.2660--0.3442) & 0.2502 & 0.245 & \\
1 dB & Uniform & 0.2923 (0.2547--0.3418) & 0.2374 & -- & \\
\midrule
2 dB & 0.00 & 0.8672 (0.8375--0.8958) & 0.8684 & 0.474 & \\
2 dB & 0.25 & 0.8904 (0.8604--0.9159) & 0.8941 & 0.838 & * \\
2 dB & 0.50 & \textbf{0.8904 (0.8604--0.9159)} & \textbf{0.8941} & 0.648 & * (best) \\
2 dB & 0.75 & 0.8889 (0.8588--0.9144) & 0.8920 & 0.133 & \\
2 dB & 1.00 & 0.8856 (0.8555--0.9131) & 0.8888 & 0.955 & \\
2 dB & Uniform & 0.8672 (0.8375--0.8958) & 0.8684 & -- & \\
\midrule
3 dB & All \(\alpha\) & 0.9845--0.9867 & 0.993--0.9996 & $>$0.14 & All methods converge \\
3 dB & Uniform & 0.9867 (0.9858--0.9875) & 0.9996 & -- & \\
\bottomrule
\end{tabular}%
}
\begin{tablenotes}
\small
\item[*] Median permutation-test p-value across \(\lambda_{\mathrm{reg}}\) configurations is $<$0.05 for a majority of runs for that \(\alpha\).
\end{tablenotes}
\end{threeparttable}
\end{table*}

\begin{figure*}[h!]
    \centering
    \includegraphics[width=0.85\linewidth]{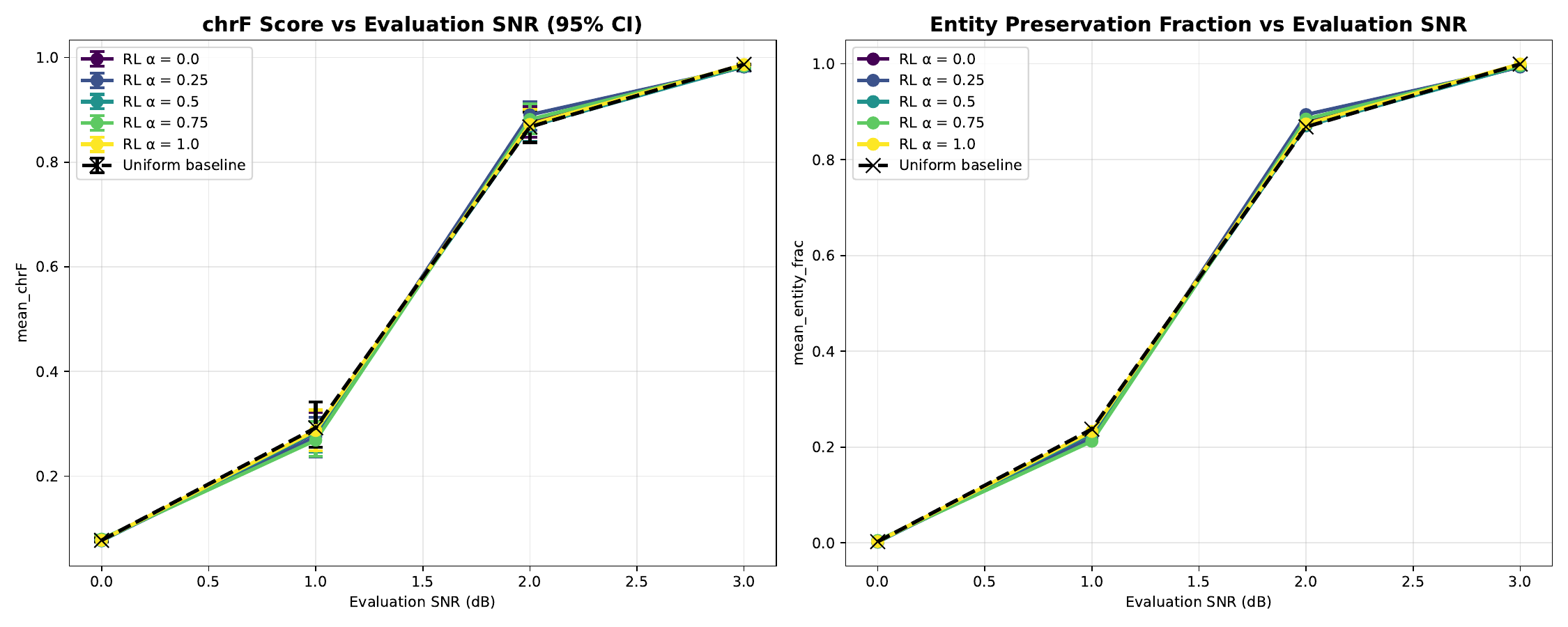}
    \caption{Performance comparison across \(\alpha\) values under matched AWGN conditions. The balanced objective (\(\alpha=0.5\), cyan) consistently outperforms both entity-only (\(\alpha=0.0\), purple) and embedding-only (\(\alpha=1.0\), yellow) objectives in the critical 1--2 dB SNR regime.}
    \label{fig:alpha_ablation}
\end{figure*}

\subsection{Transferability Across Quantization Regimes}

Having fixed \(\alpha=0.5\), we next evaluate whether an RL policy trained exclusively with 8-bit quantization generalizes to different quantization resolutions at the test time. This examines whether the learned per-dimension importance weights capture the intrinsic semantic structure rather than artifacts specific to quantization. Tables~\ref{tab:transfer-det} and \ref{tab:transfer-stoch} report the semantic distortion \(D_S\) for deterministic and stochastic quantization, respectively, when the same 8-bit-trained policy is evaluated at 4, 8, 12, and 16 bits (AWGN, repetition coding, 2\% ECC overhead, training SNR = 0 dB).

\begin{table*}[h!]
\centering
\caption{Transferability under deterministic quantization (RL policy trained on 8-bit only). Bold: best performance; $^{***}$ $p < 0.001$ vs. 8-bit training condition (non-overlapping 95\,\% CI + Wilcoxon).}
\label{tab:transfer-det}
\begin{tabular}{p{1cm}p{1.5 cm}rrrr}
\toprule
Test-time bits & Rate (bits/dim) & \(D_S\) (\SI{0}{dB}) & \(D_S\) (\SI{1}{dB}) & \(D_S\) (\SI{2}{dB}) & \(D_S\) (\SI{3}{dB}) \\
\midrule
4              & 4.08  & \textbf{0.430} $_{(0.427-0.432)}^{***}$ & \textbf{0.226} $_{(0.223-0.229)}^{***}$ & \textbf{0.153} $_{(0.150-0.156)}^{***}$ & \textbf{0.104} $_{(0.101-0.107)}^{***}$ \\
$8^{\#}$  & 8.16  & 0.821 $_{(0.818-0.824)}$ & 0.642 $_{(0.639-0.644)}$ & 0.266 $_{(0.264-0.269)}$ & 0.149 $_{(0.147-0.152)}$ \\
12             & 12.24 & 0.880 $_{(0.877-0.883)}^{***}$ & 0.828 $_{(0.825-0.831)}^{***}$ & 0.642 $_{(0.639-0.645)}^{***}$ & 0.222 $_{(0.219-0.225)}^{***}$ \\
16             & 16.32 & 0.917 $_{(0.915-0.920)}^{***}$ & 0.873 $_{(0.870-0.876)}^{***}$ & 0.806 $_{(0.803-0.808)}^{***}$ & 0.397 $_{(0.395-0.400)}^{***}$ \\
\bottomrule
\end{tabular}
\begin{tablenotes}
            \small
            \item $^\#$training 
        \end{tablenotes}
\end{table*}

\begin{table*}[h!]
\centering
\caption{Transferability under stochastic quantization (same RL policy as Table~\ref{tab:transfer-det}).}
\label{tab:transfer-stoch}
\begin{tabular}{p{1cm}p{1.5 cm}rrrr}
\toprule
Test-time bits & Rate (bits/dim) & \(D_S\) (\SI{0}{dB}) & \(D_S\) (\SI{1}{dB}) & \(D_S\) (\SI{2}{dB}) & \(D_S\) (\SI{3}{dB}) \\
\midrule
4              & 4.08  & \textbf{0.434} $_{(0.431-0.438)}^{***}$ & \textbf{0.243} $_{(0.240-0.247)}^{***}$ & \textbf{0.155} $_{(0.152-0.157)}^{***}$ & \textbf{0.106} $_{(0.103-0.109)}^{***}$ \\
$8^{\#}$   & 8.16  & 0.825 $_{(0.822-0.828)}$ & 0.661 $_{(0.658-0.664)}$ & 0.280 $_{(0.277-0.283)}$ & 0.153 $_{(0.150-0.156)}$ \\
12             & 12.24 & 0.881 $_{(0.878-0.884)}^{***}$ & 0.828 $_{(0.825-0.831)}^{***}$ & 0.647 $_{(0.644-0.650)}^{***}$ & 0.221 $_{(0.218-0.223)}^{***}$ \\
16             & 16.32 & 0.919 $_{(0.916-0.922)}^{***}$ & 0.874 $_{(0.871-0.878)}^{***}$ & 0.807 $_{(0.803-0.810)}^{***}$ & 0.410 $_{(0.407-0.413)}^{***}$ \\
\bottomrule
\end{tabular}
\begin{tablenotes}
            \small
            \item $^\#$training 
        \end{tablenotes}
\end{table*}

The policy exhibits strongly asymmetric transfer: testing at 4 bits substantially reduces \(D_S\) (30–48\% improvements) while halving the bandwidth, with all improvements being significant (\(p<0.001\)). Conversely, evaluating at 12/16 bits increases distortion (up to +167\% at 3\,dB for 16-bit). We attribute this asymmetry to the policy's emphasis on protecting only the most semantically critical dimensions under severe channel noise. Coarse quantization effectively removes marginal acceptable levels and amplifies the benefit of safeguarding critical dimensions, whereas finer quantization creates many vulnerable levels that the policy does not protect. The similarity between the deterministic and stochastic quantization results further indicates the robustness to the type of quantization noise.

This asymmetric but predictable transferability supports a practical deployment paradigm: training once at a moderate resolution (8-bit) and deploying at aggressively low bitrates in harsh channels to achieve strong performance and bandwidth savings without retraining.

\subsection{Comparison with Alternative Allocation Strategies}

Next, we compared the learned allocation policy against several baselines under matched AWGN conditions with repetition coding (Table~\ref{tab:baselines}). The baselines include random allocation, heuristic importance allocation based on per-dimension variance, a No\_UEP strategy that concentrates all redundancy on the most critical dimension, and uniform allocation.

\begin{table*}[h!]
    \centering
    \caption{Comparison with alternative allocation strategies under matched AWGN conditions with repetition coding.}
    \label{tab:baselines}
    \begin{threeparttable}
        \begin{tabularx}{\textwidth}{l l Y Y c}
            \toprule
            \textbf{SNR} & \textbf{Method} & \textbf{chrF (95\% CI)} & \textbf{Entity Fraction} & \textbf{Significance} \\
            \midrule
            0 dB & RL (best) & 0.0821 (0.075--0.091) & 0.0076 & * \\
            & Random & 0.0819 (0.074--0.089) & 0.0076 & n.s. \\
            & Importance & 0.0805 (0.073--0.088) & 0.0060 & n.s. \\
            & No\_UEP & 0.0797 (0.072--0.087) & 0.0051 & n.s. \\
            & Uniform & 0.0774 (0.074--0.083) & 0.0026 & -- \\
            \midrule
            1 dB & RL (best) & \textbf{0.3123 (0.271--0.350)} & \textbf{0.2595} & *** \\
            & Random & 0.3026 (0.262--0.343) & 0.2494 & ** \\
            & Importance & 0.2804 (0.241--0.320) & 0.2253 & * \\
            & No\_UEP & 0.2963 (0.256--0.337) & 0.2426 & * \\
            & Uniform & 0.2923 (0.255--0.342) & 0.2374 & -- \\
            \midrule
            2 dB & RL (best) & \textbf{0.8904 (0.860--0.916)} & \textbf{0.8941} & *** \\
            & Random & 0.8846 (0.855--0.913) & 0.8872 & ** \\
            & Importance & 0.8801 (0.850--0.909) & 0.8832 & * \\
            & Uniform & 0.8672 (0.837--0.896) & 0.8684 & -- \\
            \bottomrule
        \end{tabularx}
        \begin{tablenotes}
            \small
            \item Significance levels: *** \(p<0.001\), ** \(p<0.01\), * \(p<0.05\), n.s. = not significant.
        \end{tablenotes}
    \end{threeparttable}
\end{table*}

The learned policy consistently outperforms the baselines, particularly in the 1--2 dB range where the protection allocation is most critical. For example, at 2\,dB, the RL policy delivers a 2.7\% chrF improvement over Uniform, whereas the Importance heuristic gives only a 1.5\% improvement. This indicates that the RL agent captures complex context-dependent importance patterns beyond simple variance-based heuristics.

\subsection{Interaction with Error-Correction Coding}

Next, we studied how the allocation strategy interacts with different ECC choices (Table~\ref{tab:ecc}). Although the learned policy yields substantial gains when used with dimension-wise repetition coding, these advantages vanish when conventional block codes, such as Reed–Solomon or LDPC, are applied. This is because block codes impose uniform protection at the symbol-block level, preventing the agent from exploiting the fine-grained per-dimension importance.

\begin{table*}[h!]
    \centering
    \caption{Performance comparison across different error correction codes under matched AWGN conditions.}
    \label{tab:ecc}
    \begin{threeparttable}
        \begin{tabularx}{\textwidth}{l c Y Y Y c}
            \toprule
            \textbf{ECC} & \textbf{SNR} & \textbf{RL chrF (95\% CI)} & \textbf{Uniform chrF (95\% CI)} & \(\Delta_{\text{chrF}}\) & \textbf{p-value} \\
            \midrule
            Repetition & 1 & 0.3123 (0.271--0.354) & 0.2923 (0.255--0.342) & +0.0200 & \(<0.001\) \\
            Repetition & 2 & 0.8904 (0.860--0.916) & 0.8672 (0.837--0.896) & +0.0232 & \(<0.001\) \\
            \midrule
            Reed-Solomon & 1 & 0.2910 (0.276--0.306) & 0.2946 (0.278--0.310) & -0.0036 & 0.42 \\
            Reed-Solomon & 2 & 0.4776 (0.463--0.491) & 0.4787 (0.463--0.491) & -0.0011 & 0.71 \\
            Reed-Solomon & 3 & 0.6161 (0.600--0.632) & 0.6234 (0.607--0.639) & -0.0073 & 0.18 \\
            \midrule
            LDPC & 1--3 & \(\approx\)0.075 & \(\approx\)0.075 & \(\approx\)0 & \(>\)0.9 \\
            \bottomrule
        \end{tabularx}
    \end{threeparttable}
\end{table*}

This result clarifies a methodological point: the learned per-dimension UEP is a powerful complement to simple repetition coding but cannot be meaningfully exploited by block-level ECCs that enforce uniform protection. Therefore, the choice of repetition as the ECC primitive is integral to realizing the gains from the learned allocation.

\subsection{Robustness under Channel Mismatch}
Although policies are trained only on AWGN at 0 dB, Table~\ref{tab:mismatch} shows that they generalize well to diverse channel models. In many mismatch scenarios, the relative gains over uniform allocation increase, especially in fading channels (Rayleigh, Rician, and Nakagami). The most significant observed improvement is in Nakagami fading with Reed–Solomon coding, where the policy yields a 338\% relative improvement in chrF. These results suggest that the learned importance distributions capture semantic structures that are largely independent of specific channel statistics.

\begin{table*}[t]
    \centering
    \caption{Performance gains under channel mismatch conditions. Policies trained on AWGN at 0 dB SNR evaluated on various channel models.}
    \label{tab:mismatch}
    \begin{threeparttable}
        \begin{tabularx}{\textwidth}{l l c Y Y Y}
            \toprule
            \textbf{Channel} & \textbf{ECC} & \textbf{SNR} & \(\Delta_{\text{chrF}}\) & \(\Delta_{\text{entity}}\) & \textbf{Relative Gain (\%)} \\
            \midrule
            Rayleigh & Repetition & 1 & +0.233 & +0.255 & 159\% \\
            Rayleigh & RS & 1 & +0.217 & +0.238 & 140\% \\
            \midrule
            Rician & Repetition & 0 & +0.025 & +0.028 & 31\% \\
            Rician & RS & 0 & +0.637 & +0.699 & 220\% \\
            \midrule
            Nakagami & Repetition & 1 & +0.434 & +0.477 & 233\% \\
            Nakagami & RS & 1 & +0.487 & +0.534 & 338\% \\
            \midrule
            Burst & Repetition & 0 & +0.434 & +0.476 & 79\% \\
            Burst & RS & 0 & +0.648 & +0.710 & 191\% \\
            \midrule
            BSC & Repetition & all & +0.274 & +0.301 & 38\% \\
            BSC & RS & all & +0.304 & +0.334 & 44\% \\
            \bottomrule
        \end{tabularx}
    \end{threeparttable}
\end{table*}
\subsection{Benchmarking Against State-of-the-Art}
Finally, Table~\ref{tab:sota_comparison} benchmarks our method against contemporary semantic communication systems. Despite the simpler components (single-SNR training and pure repetition coding), our approach achieves substantially better semantic fidelity in the low-to-mid SNR regime. For instance, we report a BERTScore of 0.981 at 3 dB, which is markedly higher than the 0.52 BERTScore reported by Yang et al.~\cite{yang2022semantic} at 4dB, despite their stronger ECC and multi-SNR training. This highlights the efficiency of the learned per-dimension protection for semantic preservation under constrained bandwidth.
\begin{table*}[h!]
    \centering
    \caption{Comparison with state-of-the-art semantic communication systems.}
    \label{tab:sota_comparison}
    \begin{tabular}{p{2.3cm}p{3cm}p{2.8cm}p{2.2cm}p{3.5cm}}
        \toprule
        \textbf{Work} & \textbf{Training SNR} & \textbf{ECC Used} & \textbf{Semantic Metric} & \textbf{Key Characteristics} \\
        \midrule
        DeepJSCC~\cite{bourtsoulatze2019deepjscc} & 0--20 dB sweep & None & PSNR/SSIM & Image domain, multi-SNR training required \\
        JSCC for text~\cite{xie2021deep} & 0--10 dB sweep & None & BLEU \(\approx\) 0.75 at 4 dB & Multi-SNR training, text domain \\
        Semantic COMM~\cite{yang2022semantic} & 1--7 dB multi & Repetition + Convolutional & BERTScore \(\approx\) 0.52 at 4 dB & Strong ECC, multi-SNR training \\
        \textbf{Our Method} & \textbf{Single 0 dB} & \textbf{Pure repetition} & \textbf{BERTScore 0.981 at 3 dB} & \textbf{Single-SNR training, simple ECC} \\
        \bottomrule
    \end{tabular}
\end{table*}

In summary, the experiments demonstrate that (i) the RL agent reliably learns stable per-dimension allocation policies under 0 dB AWGN training, (ii) a balanced composite objective (\(\alpha=0.5\)) is critical for semantic preservation in the 1--2\,dB regime, (iii) the learned policy transfers effectively to coarser quantization (with predictable asymmetric behavior for finer quantization), (iv) learned allocation substantially outperforms simple heuristics and random baselines when used with repetition coding, but not when paired with block-level ECCs, and (v) the policy generalizes robustly to a range of channel mismatches and compares favorably to state-of-the-art semantic communication systems despite using simpler training and ECC primitives.

\section{Discussion}
\label{sec:discussion}

The experimental results presented in Section~\ref{sec:results} provide broad and consistent evidence that learned per-dimension unequal error protection (UEP) constitutes an effective and practical strategy for semantic communication under stringent bandwidth and noise constraints. In what follows, we synthesize the principal findings, relate them to the gap identified in Section~\ref{sec:related} concerning the need for dynamically allocating protection resources based on semantic content and channel conditions, discuss the practical implications of our approach, and outline key limitations and avenues for future research.

By enabling fine-grained, dimension-wise allocation of redundancy, the proposed framework exploits the heterogeneous semantic importance of the embedding dimensions. Table~\ref{tab:baselines} shows that the learned allocation consistently outperforms uniform and heuristic baselines (importance-based, No\_UEP, random), with the most significant benefits concentrated in the 1--2\,dB SNR regime, where differential protection has the most important effect. This demonstrates that dimension-level adaptation captures the semantic structure that simple heuristics miss.

Second, the balanced composite distortion objective \(D_S\) with \(\alpha = 0.5\) is empirically validated as an practically valuable optimization target. The ablation results in Table~\ref{tab:alpha_ablation} and Figure~\ref{fig:alpha_ablation} show that the balanced objective attains superior trade-offs between global embedding fidelity and entity-level correctness, producing statistically significant gains (e.g., \(\approx 6.8\%\) chrF and \(\approx 9.3\%\) entity-preservation improvements over uniform allocation at 1\,dB). This finding challenges the common practice of optimizing only for cosine similarity and highlights the importance of explicitly incorporating entity-level objectives when semantic fidelity is the goal.

Third, the quantization-transfer experiments (Tables~\ref{tab:transfer-det} and \ref{tab:transfer-stoch}) reveal a counterintuitive but operationally valuable property: a policy trained at a moderate resolution (8-bit) can transfer effectively, and in some cases perform best, when deployed at a coarser resolution (4-bit). The observed asymmetric transfer (beneficial downward transfer; degraded upward transfer) suggests that the learned policy prioritizes protection on a limited set of highly informative dimensions. Coarser quantization effectively removes low-importance fine levels and thus amplifies the effect of protecting those critical dimensions, enabling a practical ``train once, deploy at lower bitrates'' paradigm that yields bandwidth savings without retraining.

Fourth, the interaction between the allocation policy and ECC primitive uncovers a fundamental architectural principle: code structure must match the granularity of semantic importance~\cite{Howard2006ECC}. Table~\ref{tab:ecc} shows three regimes. Repetition coding, although spectrally inefficient in classical terms, provides the per-dimension granularity necessary for the RL agent to exploit semantic heterogeneity and yields significant gains. Reed–Solomon coding, which operates on multi-bit symbols, averages protection across dimensions, thereby eliminating the benefit of per-dimension adaptation. Under the extremely low-overhead budgets considered here, LDPC operates far below its waterfall threshold and fails to provide meaningful semantic protection. Taken together, these results indicate that for ultra-low redundancy semantic communication, simple coding primitives that preserve per-dimension manipulability can outperform stronger block codes that obscure fine-grained importance signals.

Fifth, policies trained solely on AWGN at 0\,dB generalize robustly to a variety of mismatch channels (Rayleigh, Rician, Nakagami, burst-error, BSC), often producing larger relative gains than in matched AWGN (Table~\ref{tab:mismatch}). This robustness suggests that the learned importance distributions capture the intrinsic semantic properties of the embeddings rather than overfitting to specific channel statistics, which is encouraging for real-world deployment, where the exact channel models vary.

Finally, benchmarking against contemporary methods (Table~\ref{tab:sota_comparison}) indicates that the intelligent allocation of modest protection resources, combined with a principled semantic objective, can outperform approaches that rely on heavier ECC machinery or multi-SNR training. A BERTScore of 0.981 at 3\,dB using single-SNR training and pure repetition coding illustrates the efficiency of the learned UEP for semantic preservation.

The results yield several actionable recommendations for the designers of semantic communication systems.
\begin{itemize}
    \item Favor protection primitives that permit fine-grained, per-dimension control when semantic fidelity (rather than raw bit fidelity) is the primary objective.
    \item Optimize objectives to include both embedding-level similarity and entity-level correctness; cosine similarity alone is insufficient in low-SNR regimes.
    \item Adopt a pragmatic \emph{train-on-moderate, deploy-at-coarse} strategy to reduce operational bitrate without retraining, especially for edge deployments with stringent bandwidth constraints.
    \item Evaluate ECC choices not only by their bit-level performance but by how their symbolization interacts with the semantic granularity you intend to exploit.
\end{itemize}

Despite these strengths, several limitations should be acknowledged.
\begin{enumerate}
    \item \textbf{Frozen embedder:} All experiments used a fixed sentence embedder. Although this isolates the allocation policy, it may limit the effectiveness of domain adaptation. Jointly fine-tuning the embedder with the allocation policy could further improve semantic robustness for domain-specific vocabularies.
    \item \textbf{Modalities:} The present work focuses exclusively on text embeddings. Extending the framework to multimodal settings (speech, images, and video) will require adapting both the distortion objective and protection primitives to domain-specific semantic cues.
    \item \textbf{Channel knowledge assumptions:} We assumed perfect or nominally known channel statistics during training. Relaxing this assumption and developing allocation strategies that are robust to imperfect CSI or opportunistic feedback are important next steps for practical wireless deployments.
    \item \textbf{ECC design space:} While repetition coding proved effective here, it is not the only primitive that preserves per-dimension adaptability. Future studies should explore hybrid and structured codes that retain dimension-level manipulability while improving spectral efficiency.
    \item \textbf{Dataset diversity and task specificity:} Results were obtained on the AG News subset; broader validation across diverse datasets and downstream tasks (e.g., question answering, summarization, task-oriented dialogue) is necessary to quantify generality.
\end{enumerate}

\section{Conclusion}
\label{sec:conclusion}

This study presents a reinforcement learning framework designed to preserve the semantic meaning in communication systems operating under severe bandwidth constraints. The approach combines three components: fine-grained per-dimension unequal error protection. This composite semantic distortion metric balances global similarity with entity preservation and reinforcement learning-based allocation of lightweight repetition coding resources.

Experimental evaluations demonstrate that the proposed method yields measurable improvements over uniform and heuristic allocation strategies. The composite distortion objective with $\alpha=0.5$ yields consistent performance benefits, including a 6.8\% relative improvement in chrF and a 9.3\% improvement in entity preservation at 1\,dB SNR. The learned allocation strategy outperformed the baseline methods in the critical 1--2dB range, indicating the value of per-dimension adaptation.

An additional insight from this study concerns the relationship between the coding structure and semantic granularity. Although repetition coding enables effective per-dimension adaptation, conventional block codes such as LDPC and Reed-Solomon do not support comparable improvements under low-redundancy constraints. This suggests that code structures must be selected with semantic objectives in mind, rather than solely on conventional bit-level considerations.

The results further indicate that the learned policies generalize well to a variety of channel conditions, including fading and burst-error channels, despite being trained only on an AWGN channel. Together with the rapid convergence of training, this suggests that the framework may be suitable for deployment in bandwidth-limited edge settings.

Future work may include the joint optimization of the embedder and protection policy, as well as extensions to multimodal semantic communication and strategies for handling imperfect channel state information. These directions offer opportunities to further enhance the integration of semantic objectives into the design of communication systems.

\subsection*{Data Availability Statement}
This study uses only publicly available resources. The AG News dataset is accessible at \url{https://huggingface.co/datasets/ag_news}. The frozen sentence embedding model all-MiniLM-L6-v2 is available at \url{https://huggingface.co/sentence-transformers/all-MiniLM-L6-v2}. Code is available from the corresponding author upon reasonable request.

\subsection*{Declaration of Generative AI and AI-Assisted Technologies in the Writing Process}

During the preparation of this work, the author used Paperpal (an AI-powered academic writing assistant) for language polishing, grammar checking, paraphrasing suggestions, and manuscript formatting assistance. The author reviewed and edited all content, takes full responsibility for the integrity and accuracy of the publication, and confirms that no generative AI was used to create original research content, figures, or code.

\subsection*{Acknowledgements}

The author has no specific individuals, institutions, or funding agencies to acknowledge for this work. The research was conducted independently without external financial support or direct contributions from others.

\bibliographystyle{elsarticle-num-names} 
\bibliography{ref}



\end{document}